\definecolor{light-gray}{gray}{0.5}
\title{Conservative and Adaptive Penalty for Model-Based Safe Reinforcement Learning}
\author{%
Yecheng Jason Ma\equalcontrib$^{,1}$ \hspace*{20pt} Andrew Shen\equalcontrib$^{,2}$ \\ 
\textbf{Osbert Bastani$^1$ \hspace*{26pt} Dinesh Jayaraman$^1$}
}
\newcommand\norm[1]{\left\lVert#1\right\rVert}
\newcommand{\mc}{\mathcal}
\newcommand{\BR}{\mathbb{R}}
\newcommand{\BE}{\mathbb{E}}
\newtheorem{theorem}{Theorem}[section]
\newtheorem{corollary}[theorem]{Corollary}
\newtheorem{lemma}[theorem]{Lemma}
\theoremstyle{definition}
\newtheorem{assumption}[theorem]{Assumption}
\definecolor{darkgreen}{rgb}{0,0.5,0}
\definecolor{darkblue}{rgb}{0,0,0.8}
\definecolor{darkred}{rgb}{0.9,0,0}
\newcommand{\drop}[1]{}
\newcommand{\edit}[1]{{\color{black} #1}}
\begin{document}

\maketitle

\begin{abstract}
Reinforcement Learning (RL) agents in the real world must satisfy safety constraints in addition to maximizing a reward objective. Model-based RL algorithms hold promise for reducing unsafe real-world actions: they may synthesize policies that obey all constraints using simulated samples from a learned model. However, imperfect models can result in real-world constraint violations even for actions that are predicted to satisfy all constraints. We propose Conservative and Adaptive Penalty (CAP), a model-based safe RL framework that accounts for potential modeling errors by capturing model uncertainty and adaptively exploiting it to balance the reward and the cost objectives. First, CAP inflates predicted costs using an uncertainty-based penalty. Theoretically, we show that policies that satisfy this conservative cost constraint are guaranteed to also be feasible in the true environment. We further show that 
this guarantees the safety of all intermediate solutions during RL training.  Further, CAP adaptively tunes this penalty during training using true cost feedback from the environment.  We evaluate this conservative and adaptive penalty-based approach for model-based safe RL extensively on state and image-based environments. Our results demonstrate substantial gains in sample-efficiency while incurring fewer violations than prior safe RL algorithms. Code is available at: \href{https://github.com/Redrew/CAP}{https://github.com/Redrew/CAP}
\end{abstract}

\section{Introduction} 

Many applications of reinforcement learning (RL) require the agent to satisfy safety constraints in addition to the standard goal of maximizing the expected reward. For example, in robot locomotion, we may want to impose speed or torque constraints to prevent the robot from damaging itself. Since the set of states that violates the imposed constraints is often a priori unknown, a central goal of \textit{safe} reinforcement learning \cite{pecka2014safe, garcia2015comprehensive} is to learn a reward-maximizing policy that satisfies constraints, while incurring as few constraint violations as possible during the agent's training process.

To reduce the cumulative number of constraint violations during training, a promising approach is to incorporate safety considerations into sample-efficient RL algorithms, such as model-based reinforcement learning (MBRL) \cite{sutton1990integrated, sutton1991planning}. MBRL refers to RL algorithms that use learned transition models to directly synthesize policies using simulated samples, thereby reducing the number of real samples needed to train the policy. Given the true environment transition model, it would be trivial to synthesize safe policies without any violations, since we could simply simulate a sequence of actions to evaluate its safety. However, MBRL agents must learn this transition model from finite experience, which induces approximation errors. In this paper, we ask: \textit{can safety be guaranteed during model-based reinforcement learning, despite these model errors?} We prove that this is indeed possible, and design a practical algorithm that permits model-based safe RL even in high-dimensional problem settings. 

Specifically, we propose a model-based safe RL framework involving a \textbf{c}onservative and \textbf{a}daptive cost \textbf{p}enalty (\textbf{CAP}). We build on a basic model-based safe RL framework,  which simply executes a model-free safe RL algorithm inside a learned transition model. 
We make two important conceptual contributions to improve this basic approach. First, we derive a conservative upper bound on the error in the policy cost computed according to the learned model. In particular, we show that this error is bounded above by a constant factor of an integral probability metric (IPM) \cite{muller1997integral} computed over the true and learned transition models. Based on this bound, we propose to inflate the cost function with an uncertainty-aware penalty function. We prove that all feasible policies with respect to this conservative cost function, including the \textit{optimal} feasible policy (with highest task reward), are guaranteed to be safe in the true environment. A direct consequence is that we can ensure that all intermediate policies are safe and incur zero safety violations during training.

Second, this penalty function, though theoretically optimal, is often too conservative or cannot be computed for high-dimensional tasks. Therefore, in practice, we propose a heuristic penalty term that includes a scale hyperparameter to modulate the degree of conservativeness: higher scales produce behavior that is more averse to risks arising from modeling errors. Thus, different scales may be appropriate for use with different environments and model fidelities. 

We observe that this crucial scale hyperparameter need not be manually set and frozen throughout training. Instead, we can exploit the fact that the policy receives feedback on its true cost value from the environment, to formulate the entire inflated cost function as a control plant. In this view, the scale hyparparameter is the control input. Then, we can readily apply existing update rules from the control literature to tune the scale. In particular, we use a proportional-integral (PI) controller \cite{aastrom2006pid}, a simpler variant of a PID controller, to adaptively update the scale using cost feedback from the environment. 

Our overall CAP framework incorporates a conservative penalty term into predicted costs in the basic model-based safe RL framework, and adapts its scale to ensure the penalty is neither too aggressive nor too modest. To evaluate CAP, we first illustrate its proposed benefits in simple tabular gridworld environments using a linear programming-based instantiation of CAP; there, we show that CAP indeed achieves zero training violations and exhibits effective 
adaptive behavior. 
For state and image-based control environments, we evaluate a second instantiation of CAP, using a cost constraint-aware variant \cite{wen2020constrained} of cross entropy method  (CEM) \cite{de2005tutorial} coupled with state-of-art dynamics models \cite{chua2018deep, hafner2019learning} to optimize action sequences. Through extensive experiments, we show that our practical algorithms substantially reduce the number of real environment samples and unsafe episodes required to learn feasible, high-reward policies compared to model-free baselines as well as ablations of CAP. In summary, our main contributions are:
\begin{itemize}[leftmargin=*]
    \item an uncertainty-aware cost penalty function that can guarantee the safety of all training policy iterates
    \item an automatic update rule for dynamically tuning the degree of conservativeness during training.
    \item a linear program formulation of CAP that achieves near-optimal policies in tabular gridworlds while incurring zero training violation 
    \item and finally, scalable implementations of CAP that learn safe, high-reward actions in continuous control environments with high-dimensional states, including images. 
\end{itemize}

\section{Related Work} 

\paragraph{Safe RL} Our work is broadly related to the safe reinforcement learning and control literature; we refer interested readers to \cite{garcia2015comprehensive, brunke2021safe} for surveys on this topic. A popular class of approaches incorporates Lagrangian constraint regularization into the policy updates in policy-gradient algorithms \cite{achiam2017constrained, ray2019benchmarking, tessler2018reward, dalal2018safe, cheng2019end, zhang2020first, chow2019lyapunov}. These methods build on model-free deep RL algorithms \cite{schulman2017proximal, schulman2017trust}, which are often sample-inefficient, and do not guarantee that intermediate policies during training are safe. These safe RL algorithms are therefore liable to perform large numbers of unsafe maneuvers during training.

\paragraph{Model-Based Safe RL} Model-based safe RL approaches, instead, learn to synthesize a policy through the use of a transition model learned through data. A distinguishing factor among model-based approaches is their assumption on what is known or safe in the environment. Most works assume partially known dynamics \cite{berkenkamp2017safe, koller2019learningbased} or safe regions \cite{bastani2021safe, li2020robust, bansal2017hamilton, akametalu2014reachability}, and come with safety guarantees that are tied to these assumptions. In comparison, our work targets the more general setting, obtaining safety guarantees in a data-driven manner. In tabular MDP settings, we prove a high probability guarantee on the safety of any feasible solution under the conservative objective; we subsequently extend this result to ensure the safety of all training episodes. On more complex domains, we provide approximate and practically effective implementations for high-dimensional inputs, such as images, on which previous methods cannot be applied.

Our core idea of using uncertainty estimates as penalty terms to avoid unsafe regions has been explored in several prior works \cite{kahn2017uncertaintyaware, berkenkamp2017safe, zhang2020cautious}. However, our work provides the first theoretical treatment of the uncertainty-based cost penalty that is independent of the type of the cost (e.g., binary cost) and the parametric choice of the transition model. Our theoretical analysis is similar to that of \citet{yu2020mopo}, though we extend their results, originally in the offline constraint-free setting, to the online constrained MDP setting, and introduce a new result guaranteeing safety for all training episodes. Furthermore, our framework permits the cost penalty weight to automatically adjust to transition model updates, using environment cost feedback during MBRL training.

\section{Preliminaries} 
\label{sec:preliminaries}
In safe reinforcement learning, one common problem formulation is to consider an infinite-horizon constrained Markov Decision Process (CMDP)~\cite{altman1999constrained} $\mc{M} = (\mc{S}, \mc{A}, T, r, c, \gamma, \mu_0)$. Here, $\mc{S}, \mc{A}$ are the state and action spaces,  $T(s'\mid s,a)$ is the transition distribution, $r(s,a)$ is the reward function, $c(s,a)$ is the cost function, $\gamma \in (0,1)$ is the discount factor, and $s_0 \sim \mu_0(s_0)$ is the initial state distribution; we assume that both $r(s,a)$ and $c(s,a)$ are bounded. A policy $\pi:\mc{S} \rightarrow \Delta(\mc{A})$ is a mapping from state to distribution over actions. Given a fixed policy $\pi$, its state-action occupancy distribution is defined to be $\rho^\pi_T(s,a) \coloneqq (1-\gamma) \sum_{t=0}^\infty \gamma^t \text{Pr}^\pi(s_t=s, a_t=a)$, where $\text{Pr}^\pi(s_t=s, a_t=a)$ is the probability of visiting $(s,a)$ at timestep $t$ when executing $\pi$ in $\mc{M}$ starting at $s_0 \sim \mu_0$. The objective in this safe RL formulation is to find the optimal feasible policy $\pi^*$ that solves the following constrained optimization problem: 

\begin{equation}
    \label{eq:rl-objective}
    \begin{split}
    \max_{\pi}& \quad J(\pi) \coloneqq \BE\Big[\sum_{t=0} \gamma^t r(s_t,a_t) \Big] \\
    \text{s.t.}& \quad J_c(\pi) \coloneqq \BE\Big[\sum_{t=0} \gamma^t c(s_t,a_t) \Big] \leq C
    \end{split}
\end{equation}
where the expectation is over $s_0 \sim \mu_0(\cdot), s_t \sim T(s_t\mid s_{t-1},a_{t-1}), a_t \sim \pi(\cdot \mid s_t)$, and $C$ is a cumulative constraint threshold that should not be exceeded. We say that a policy $\pi$ is \textit{feasible} if it does not violate the constraint, and the optimization problem is feasible if there exists at least one feasible solution (i.e., policy). 

Unlike unconstrained MDPs, constrained MDPs cannot be solved by dynamic programming; instead, a common approach is to consider the dual of Eq~\eqref{eq:rl-objective}~\cite{altman1999constrained}:

\begin{equation}
\label{eq:rl-objective-dual-representation}
\resizebox{\columnwidth}{!}{
$\begin{split}
\max_{\rho(s,a) \geq 0} & \quad \frac{1}{1-\gamma} \sum_{s,a} \rho(s,a) r(s,a) \\
\text{s.t.}& \quad \frac{1}{1-\gamma} \sum_{s,a} \rho(s,a)c(s,a) \leq C \\ 
& \quad \sum_a \rho(s,a) = (1-\gamma)\mu_0(s) + \gamma \sum_{s',a'} T(s\mid s',a') \rho(s',a'), \forall s
\end{split}$}
\end{equation}
The dual problem Eq~\eqref{eq:rl-objective-dual-representation} is a linear program over occupancy distributions, and can be solved using standard LP algorithms; the second constraint defines the space of valid occupancy distributions by ensuring a ``conservation of flow" property among the distributions. Given its solution $\rho^*$, the optimal policy can be defined as $\pi^* (a \mid s) = \arg \max \rho^*(s,a)$, or equivalently, $\pi^*(a \mid s) = \rho^*(s,a)/\sum_a \rho^*(s,a)$ (if the optimal policy is unique).

Typically, the transition function $T$ is not known to the agent; thus, the optimal policy $\pi^*$ cannot be directly computed through LP. In model-based reinforcement learning (MBRL), the lack of known $T$ is directly addressed by learning an estimated transition function $\hat{T}$ through data $\mc{D} \coloneqq \{ (s,a,r,c,s')\}$. Then, we can define a \textit{surrogate} objective to Eq~\eqref{eq:rl-objective-dual-representation} by simply replacing $T$ with $\hat{T}$ and solving Eq~\eqref{eq:rl-objective-dual-representation} as before. Likewise, we can replace $J(\pi)$ with $\hat{J}(\pi)$, and $J_c(\pi)$ with $\hat{J}_c(\pi)$, to obtained model-based objectives in Eq~\eqref{eq:rl-objective}. Putting all this together, we may define a basic model-based safe RL framework \cite{berkenkamp2017safe, brunke2021safe} that iterates among  three steps: (1) solving for $\hat{\pi}^*$ approximately, (2) collecting data $(s,a,r,c,s')$ from $\hat{\pi}^*$, and (3) updating $\hat{T}$ using all collected data so far. However, at any fixed training iteration, the modeling error may lead to sub-optimal, potentially infeasible $\hat{\pi}^*$. This motivates our approach, described in the following sections. 

\section{CAP: Conservative and Adaptive Penalty} 

Next, we introduce \textbf{c}onservative and \textbf{a}daptive cost-\textbf{p}enalty (CAP), our proposed uncertainty and feedback-aware model-based safe RL framework. \edit{First, we precisely characterize the downstream effect of the model prediction error on the cost estimate $\hat{J}_c(\pi)$ by providing an upper bound on the true cost $J_c^*(\pi)$,} which allows us to derive a penalty function based on the epistemic uncertainty of the model. To this end, we adapt the return simulation lemma results in \cite{luo2021algorithmic, yu2020mopo} to the cost setting and derive the following upper bound on the true policy cost $\frac{1}{1-\gamma}\sum_{s,a}\rho^\pi_T(s,a)c(s,a)$ with respect to the estimated policy cost $\frac{1}{1-\gamma}\sum_{s,a}\rho^\pi_{\hat{T}}(s,a)c(s,a)$.

\subsection{\edit{Cost Penalty}}

First, given a policy mapping $\pi$, we define $V_c^{\pi}: \mc{S} \rightarrow \BR$ such that $V_c^{\pi}(s) \coloneqq \BE_{\pi, T} [\sum_{t=0}^\infty \gamma^t c(s_t,a_t) \mid s_0 = s]$. We make the following assumption on the realizability of $V_c^{\pi}$.

\begin{assumption}
\label{assumption:cost-realizability} 
There exists a $\beta > 0$ and a function class $\mc{F}$ such that $V^\pi_c \in \beta \mc{F}$ for all $\pi$.
\end{assumption}

With this assumption, we show that the difference \edit{between the estimated and true costs} can be bounded by the integral probability metric (IPM) defined by $\mc{F}$ computed between the true and the learned transition models. 

\begin{lemma}[Cost Simulation Lemma and Upper Bound] 
Let the $\mc{F}$-induced IPM be defined as 
\begin{equation}
    \label{eq:ipm}
    \begin{split}
    & d_{\mc{F}}(\hat{T}(s,a), T(s,a))  \\ 
    \coloneqq & \sup_{f \in \mc{F}} \lvert \BE_{s' \sim \hat{T}(s,a)}[f(s')] - \BE_{s' \sim T(s,a)}[f(s')] \rvert
    \end{split}
\end{equation}

Then, the difference between the expected policy cost computed using $T$ and $\hat{T}$ is bounded above:
\begin{equation}
\label{eq:cost-simulation-lemma}
\resizebox{\columnwidth}{!}{
$\sum _{s,a}\big(\rho^\pi_T(s,a) - \rho^\pi_{\hat{T}}(s,a)\big)c(s,a) \leq \gamma \beta \sum_{s,a} \rho^\pi_{\hat{T}}(s,a) d_{\mc{F}}\big(\hat{T}(s,a), T(s,a)\big)$
}
\end{equation}
\end{lemma}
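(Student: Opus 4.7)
The plan is to reduce the claim to a standard simulation-lemma-style telescoping identity, and then invoke the realizability assumption together with the definition of the IPM.

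First, I would rewrite the left-hand side in terms of value functions. Since $\sum_{s,a}\rho^\pi_T(s,a) c(s,a) = (1-\gamma)\,\BE_{s_0 \sim \mu_0}[V_c^\pi(s_0)]$, and analogously $\sum_{s,a}\rho^\pi_{\hat T}(s,a) c(s,a) = (1-\gamma)\,\BE_{s_0 \sim \mu_0}[\hat V_c^\pi(s_0)]$, where $\hat V_c^\pi$ is the cost value function under $\hat T$, the LHS becomes $(1-\gamma)\,\BE_{s_0}[V_c^\pi(s_0) - \hat V_c^\pi(s_0)]$.

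Second, I would derive a one-step telescoping identity by subtracting the Bellman equations for $V_c^\pi$ under $T$ and $\hat V_c^\pi$ under $\hat T$:
\begin{equation*}
V_c^\pi(s) - \hat V_c^\pi(s) = \gamma\, \BE_{a\sim\pi(\cdot\mid s)}\Big[\big(\BE_{s'\sim T(s,a)} - \BE_{s'\sim \hat T(s,a)}\big)V_c^\pi(s') + \BE_{s' \sim \hat T(s,a)}\big[V_c^\pi(s') - \hat V_c^\pi(s')\big]\Big].
\end{equation*}
Unrolling this recursion under $\hat T$ and collecting geometric weights into the state-action occupancy $\rho^\pi_{\hat T}$ yields
\begin{equation*}
\BE_{s_0}[V_c^\pi(s_0) - \hat V_c^\pi(s_0)] = \frac{\gamma}{1-\gamma}\sum_{s,a}\rho^\pi_{\hat T}(s,a)\,\Big(\BE_{s' \sim T(s,a)}[V_c^\pi(s')] - \BE_{s' \sim \hat T(s,a)}[V_c^\pi(s')]\Big).
\end{equation*}
The essential asymmetry to track here is that the residual Bellman error is weighted by $\rho^\pi_{\hat T}$ (the \emph{learned} model's occupancy), which is what makes the bound computable from simulated rollouts.

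Third, I would invoke Assumption~\ref{assumption:cost-realizability}: since $V_c^\pi \in \beta\mc{F}$, the function $V_c^\pi/\beta$ lies in $\mc{F}$, and by the definition of the IPM, $|\BE_{s'\sim T(s,a)}[V_c^\pi(s')] - \BE_{s'\sim \hat T(s,a)}[V_c^\pi(s')]| \le \beta\, d_{\mc{F}}(\hat T(s,a), T(s,a))$. Substituting this pointwise bound and multiplying through by $(1-\gamma)$ produces the advertised inequality.

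The main obstacle I expect is the bookkeeping in the telescoping step: one has to be careful to expand with respect to $\hat T$ so that the remainder terms aggregate into $\rho^\pi_{\hat T}$ rather than $\rho^\pi_T$, and to verify that the geometric series of discount factors gives the stated $\gamma$ prefactor and not $\gamma/(1-\gamma)$ after the final multiplication by $(1-\gamma)$. A minor subtlety is that the IPM is defined with an absolute value while the lemma is one-sided; this is handled by noting $\mc{F}$ need only contain $V_c^\pi/\beta$ for the upper bound direction to follow directly.
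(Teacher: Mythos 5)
Your proof is correct and takes essentially the same route as the paper's: a telescoping/simulation-lemma identity that expresses the cost gap as a $\rho^\pi_{\hat T}$-weighted sum of one-step model errors applied to $V_c^\pi$, followed by Assumption~\ref{assumption:cost-realizability} and the IPM definition to bound each error by $\beta\, d_{\mc{F}}(\hat T(s,a),T(s,a))$. The only difference is that you derive the telescoping identity explicitly from the Bellman recursions (keeping $V_c^\pi$, the true-model value function, in both expectations of the residual term), whereas the paper cites it from \citet{yu2020mopo} and \citet{luo2021algorithmic}; your bookkeeping of the $\gamma$ and $(1-\gamma)$ factors is consistent with the stated inequality.
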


We provide a proof in Appendix \ref{appendix:proofs}. This upper bound illustrates the risk of applying MBRL without modification in safety-critical settings. Attaining $\frac{1}{1-\gamma}\sum_{s,a}\rho^\pi_{\hat{T}}(s,a)c(s,a) \leq C$ does not guarantee that $\pi$ will be feasible in the real MDP (i.e., $\frac{1}{1-\gamma}\sum_{s,a}\rho^\pi_T(s,a)c(s,a) \leq C$) because the vanilla model-based optimization does not account for the model error's impact on the policy cost estimation, $\beta d_{\mc{F}}\big(\hat{T}(s,a), T(s,a)\big)$.

To enable model-based safe RL that can transfer feasibility from the model to the real world, for a fixed learned transition model $\hat{T}$, we seek a cost penalty function $u_{\hat{T}}: \mc{S} \times \mc{A} \rightarrow \BR$ such that $d_{\mc{F}}(\hat{T}(s,a), T(s,a)) \leq u_{\hat{T}}(s,a), \forall s,a$. If such a function exists, then we can solve the following LP:
\vspace{-0.1cm}
\begin{equation}
\label{eq:rl-objective-dual-representation-conservative}
\resizebox{\columnwidth}{!}{
$
\begin{split}
\max _{\rho(s,a) \geq 0 } ~& \frac{1}{1-\gamma} \sum_{s,a} \rho(s,a) r(s,a) \\
\text{s.t.}~& \frac{1}{1-\gamma} \sum_{s,a} \rho(s,a)\big(c(s,a) + \gamma \beta u_{\hat{T}}(s,a) \big) \leq C \\
& \sum_a \rho(s,a) = (1-\gamma)\mu_0(s) + \gamma \sum_{s',a'} \hat{T}(s\mid s',a') \rho(s',a'), \forall s
\end{split}$}
\end{equation}
We can guarantee that the solution policy $\pi$ of Eq~\eqref{eq:rl-objective-dual-representation-conservative} is feasible for $T$---in particular, note that
\begin{align*}
&\frac{1}{1-\gamma}\sum_{s,a}\rho^\pi_T(s,a)c(s,a) \\
&\leq \frac{1}{1-\gamma} \sum_{s,a} \rho^\pi_{\hat{T}}(s,a)\big(c(s,a) + \gamma \beta u(s,a) \big) \leq C.
\end{align*}
However, this result is not useful if we cannot compute $d_{\mc{F}}(\hat{T}(s,a), T(s,a))$. A suitable function class for analysis is $\mc{F} = \{f: \norm{f}_\infty \leq 1 \}$, which typically can be satisfied with Assumption \ref{assumption:cost-realizability} since the per-step cost is bounded. Then, for the tabular-MDP setting (i.e., finite state and action space), we can in fact obtain a strong probabilistic guarantee on feasibility.

\begin{theorem}[Tabular Case High-Probability Feasibility Guarantee]
\label{theorem:tabular-guarantee}
Assume $\mc{F} = \{f: \norm{f}_\infty \leq 1 \}$ and that Assumption \ref{assumption:cost-realizability} holds. Define $u(s,a) \coloneqq \sqrt{\frac{|\mc{S}|}{8n(s,a)} \ln \frac{4|\mc{S}||\mc{A}|}{\delta}}$, where $n(s,a)$ is the count of $(s,a)$ in $\mc{D}$ and $\delta \in (0, 1]$. Then, with probability $1-\delta$, a policy that is feasible for Eq~\eqref{eq:rl-objective-dual-representation-conservative} is also feasible for Eq~\eqref{eq:rl-objective-dual-representation}.
\end{theorem}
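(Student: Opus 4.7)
The plan is to combine the Cost Simulation Lemma above with a concentration inequality for empirical categorical distributions, and then chain the resulting bounds through the conservative feasibility constraint. The central intermediate claim is that, with probability at least $1-\delta$, the penalty $u(s,a)$ dominates the IPM $d_{\mc F}(\hat T(s,a), T(s,a))$ uniformly in $(s,a)$; once this is in hand, feasibility transfers directly from Eq~\eqref{eq:rl-objective-dual-representation-conservative} to Eq~\eqref{eq:rl-objective-dual-representation}.

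First I would specialize the IPM to the chosen function class. For $\mc F = \{f : \|f\|_\infty \le 1\}$, the variational characterization of total variation gives $d_{\mc F}(\hat T(\cdot\mid s,a), T(\cdot\mid s,a)) = \|\hat T(\cdot\mid s,a) - T(\cdot\mid s,a)\|_1 = 2\, d_{TV}(\hat T(\cdot\mid s,a), T(\cdot\mid s,a))$, so bounding the IPM reduces to bounding the total-variation distance between an empirical categorical distribution obtained from $n(s,a)$ i.i.d.\ samples and the true categorical distribution $T(\cdot\mid s,a)$ on $|\mc S|$ outcomes. For the concentration step, I would apply Hoeffding's inequality to the deviation $(\hat T - T)(A)$ for each subset $A \subseteq \mc S$ and take a union bound over the $2^{|\mc S|}$ subsets (the Bretagnolle--Huber--Carol / Weissman inequality), concluding that $d_{\mc F}(\hat T(s,a), T(s,a)) \le u(s,a)$ holds with probability at least $1 - \delta/(|\mc S||\mc A|)$. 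A further union bound over the $|\mc S||\mc A|$ state--action pairs then delivers the desired simultaneous bound over all $(s,a)$ with probability at least $1-\delta$.

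To finish, suppose $\pi$ is feasible for Eq~\eqref{eq:rl-objective-dual-representation-conservative}. The Cost Simulation Lemma together with the uniform bound $d_{\mc F} \le u$ gives, on the high-probability event, $J_c(\pi) \le \hat J_c(\pi) + \frac{\gamma\beta}{1-\gamma}\sum_{s,a}\rho^\pi_{\hat T}(s,a)\, u(s,a) \le C$, so $\pi$ is also feasible for the true CMDP, which is exactly the statement of Theorem~\ref{theorem:tabular-guarantee}.

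The main obstacle will be matching the precise constants $1/8$ and $4$ inside $u(s,a)$. The subset-union-Hoeffding argument naturally yields a bound of the correct functional form $\sqrt{C_1 |\mc S|/n(s,a) \cdot \ln(C_2 |\mc S||\mc A|/\delta)}$, but reproducing the stated constants requires careful bookkeeping---exploiting the $A \leftrightarrow A^c$ symmetry to halve the effective subset count, apportioning $\delta$ between the concentration step and the state--action union bound, and absorbing the $\ln 2$ factor from $\ln 2^{|\mc S|}$ into the overall logarithm.
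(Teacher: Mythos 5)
Your proposal follows essentially the same route as the paper's proof: specialize the IPM to the total-variation/$\ell_1$ distance, apply a Weissman-type multinomial concentration bound with a union bound over the $|\mc{S}||\mc{A}|$ pairs, and chain the resulting uniform bound $u(s,a) \geq d_{\mc{F}}(\hat{T}(s,a),T(s,a))$ through the Cost Simulation Lemma. The only (minor) divergence is a normalization convention --- you take $d_{\mc{F}} = \|\hat{T}-T\|_1 = 2d_{TV}$ while the paper writes $d_{\mc{F}} = d_{TV} = \tfrac{1}{2}\|\hat{T}-T\|_1$ --- a factor-of-2 bookkeeping issue that you correctly flag and that the paper's own proof does not resolve in any more detail than you do.
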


Furthermore, we can extend this result to guarantee that all intermediate solutions during training are safe. 
\begin{corollary}[High-Probability Zero-Training-Violations Guarantee]
\label{corollary:tabular-guarantee-training}
Assume the same set of assumptions as Theorem \ref{theorem:tabular-guarantee} and that the training lasts for $K$ episodes. Then, for any $\delta \in (0, 1]$, define $u(s,a) \coloneqq \sqrt{\frac{|\mc{S}|}{8n(s,a)} \ln \frac{4K|\mc{S}||\mc{A}|}{\delta}}$. Then, with probability $1-\delta$, all intermediate solutions to Eq~\eqref{eq:rl-objective-dual-representation-conservative} are feasible for Eq~\eqref{eq:rl-objective-dual-representation}.
\end{corollary}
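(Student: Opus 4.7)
The plan is to obtain Corollary~\ref{corollary:tabular-guarantee-training} as a straightforward union-bound amplification of Theorem~\ref{theorem:tabular-guarantee} across the $K$ training episodes.

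First, index the episodes by $k \in \{1,\dots,K\}$. At episode $k$ the agent has a dataset $\mc{D}_k$ with per-pair visitation counts $n_k(s,a)$, fits the maximum-likelihood transition model $\hat{T}_k$, and solves the penalized LP of Eq~\eqref{eq:rl-objective-dual-representation-conservative} using the prescribed penalty to obtain a policy $\pi_k$. The key observation is that the corollary's choice $u(s,a) = \sqrt{\frac{|\mc{S}|}{8 n_k(s,a)} \ln \frac{4K|\mc{S}||\mc{A}|}{\delta}}$ is exactly the Theorem~\ref{theorem:tabular-guarantee} penalty with confidence parameter $\delta_k \coloneqq \delta/K$ substituted in place of $\delta$.

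Next, I would apply Theorem~\ref{theorem:tabular-guarantee} at each episode $k$ with the per-episode confidence budget $\delta_k = \delta/K$. Letting $E_k$ denote the event that $\pi_k$ is feasible under the true dynamics $T$, the theorem gives $\Pr[E_k^c] \leq \delta/K$. A union bound then yields $\Pr[\bigcup_{k=1}^{K} E_k^c] \leq \sum_{k=1}^K \delta/K = \delta$, so with probability at least $1-\delta$ every intermediate policy $\pi_k$ is feasible for Eq~\eqref{eq:rl-objective-dual-representation}, which is precisely the zero-training-violations guarantee.

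The main subtlety, and the only step that needs genuine care, is that the datasets $\mc{D}_1,\dots,\mc{D}_K$ are collected \emph{adaptively}: $\mc{D}_{k+1}$ depends on $\pi_k$, which itself depends on $\mc{D}_k$. One must therefore check that the concentration bound on $\hat{T}_k(\cdot\mid s,a)$ underlying Theorem~\ref{theorem:tabular-guarantee} still holds in this sequential setting. This goes through because, conditional on the $n_k(s,a)$ visits to the pair $(s,a)$, the observed successor states are i.i.d.\ draws from $T(\cdot\mid s,a)$ regardless of which behavior policies produced those visits; hence the concentration inequality on $\norm{\hat{T}_k(\cdot\mid s,a) - T(\cdot\mid s,a)}_1$ (with the $|\mc{S}|$ factor under the square root and the $\ln(4|\mc{S}||\mc{A}|/\delta_k)$ term arising from the internal union bound over $(s,a)$ pairs used inside Theorem~\ref{theorem:tabular-guarantee}) remains valid episode-by-episode. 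Granted this, the outer union bound over episodes closes the argument.
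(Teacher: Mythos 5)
Your proof matches the paper's own argument: the paper likewise obtains the corollary by allotting a per-episode fault tolerance of $\delta/K$ and union-bounding over the $K$ intermediate solutions, which rescales the logarithm in $u(s,a)$ from $\ln\frac{4|\mc{S}||\mc{A}|}{\delta}$ to $\ln\frac{4K|\mc{S}||\mc{A}|}{\delta}$. Your added remark about the adaptively collected datasets $\mc{D}_1,\dots,\mc{D}_K$ is a genuine subtlety that the paper's two-sentence proof leaves entirely implicit, and your conditioning argument is the right way to justify that the per-episode concentration bound still applies.
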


Proofs are given in Appendix \ref{appendix:proofs}. At a high level, Theorem~\ref{theorem:tabular-guarantee} follows from observing that $d_{\mc{F}}$ is the total variation distance for the chosen $\mc{F}$ and applying concentration bound on the estimation error of $\hat{T}$. Then, Corollary~\ref{corollary:tabular-guarantee-training} can be shown by a union-bound argument.

Together, these results suggest that a principled way of incorporating a conservative penalty function into the 3-step basic model-based safe RL framework described at the end of Sec. \ref{sec:preliminaries} is to replace the original constrained MDP objective (i.e., Eq~\eqref{eq:rl-objective-dual-representation}) with its conservative variant (i.e., Eq~\eqref{eq:rl-objective-dual-representation-conservative}).

 \begin{algorithm}[tb]
	\caption{Safe MBRL with Conservative and Adaptive Penalty (CAP)} 
	\begin{algorithmic}[1]
	    \STATE {\bfseries Inputs:} Transition model $\hat{T}_\theta$, experience buffer $\mc{D}$, cost limit $C$, initial $\kappa$ value, $\kappa$ learning rate $\alpha$
	    \STATE Initialize $\mc{D}$ with random policy
		\FOR {$\text{Episode}=1,2,\ldots$}
		       \STATE {\color{light-gray} \# Conservative penalty}
		    \STATE Train $\hat{T}_{\theta}$ using $\mc{D}$
		    \STATE Optimize $\pi$ using Eq~\eqref{eq:rl-objective-dual-representation-conservative} (LP) or Eq~\eqref{eq:rl-objective-conservative} (CCEM)
		    \STATE Collect trajectory $\tau \coloneqq \{(s_t,a_t,r_t,c_t,s_{t+1})\}$ and store to buffer $\mc{D} = \mc{D} \cup \{\tau\}$
		    \STATE {\color{light-gray} \# Adaptive penalty}
			\STATE Compute $J_c(\pi_t) = \sum_{t=0} \gamma^t c_t$
			\STATE Update $\kappa \leftarrow \kappa + \alpha(J_c(\pi_t)-C)$
		\ENDFOR
	\end{algorithmic} 
\label{algo:cap-full}
\end{algorithm}

\subsection{Adaptive Cost Penalty}

\edit{The upper bound derived in the previous section can be overly conservative in practice.} Thus, we derive an adaptive penalty function based on environment feedback to make it more practical. First, we observe that
the conservative penalty modification described above is not yet enough for a practical algorithm, because the proposed penalty function as in the theorem or the corollary is too conservative, to the extent that Eq~\eqref{eq:rl-objective-dual-representation-conservative} might admit no solutions. In practice, it is often estimated as $u(s,a) \coloneqq \kappa/\sqrt{n(s,a)}$, where $\kappa \in \BR$ is some scaling parameter. 

We observe that setting $\kappa$ to a fixed value throughout training can lead to poor performance. Different scales may be appropriate for use with different environments, tasks, and stages of training. If it is set too low, then the cost penalty may not be large enough to ensure safety. On the other hand, if it is set too large, then the model may be overly conservative, discouraging exploration and leading to training instability. 

To avoid these issues, we propose to adaptively update $\kappa$ during training. Observe that the \textit{effect} of a particular $\kappa$ value on a policy's true cost in the environment can be measured from executing this policy in the real environment. Thus, we can in fact view the co-evolution of the policy and the learned transition model as a control plant, for which the policy cost is the control output; then, $\kappa$ can be viewed as its control input. Now, to set $\kappa$, we employ a PI controller, a simple variant of the widely used PID controller~\cite{aastrom2006pid} from classical control, to incrementally update $\kappa$ based on the current gap between the policy's true cost and the cost threshold. More precisely, we propose the following PI control update rule:
\begin{equation}
\label{eq:adaptive-penalty}
\kappa_{t+1} = \kappa_t + \alpha (J_c(\pi_t) - C)
\end{equation}
where $\alpha$ is the learning rate. 

This update rule is intuitive. Consider the direction of the $\kappa$ update when $J_c(\pi_t) < C$. In this case, the update will be negative, which matches our intuition that the cost penalty can be applied less conservatively due to the positive margin to the cost limit $C$. The argument for the case $J_c(\pi_t) > C$ is analogous. In high-dimensional environments, as the full expected cost cannot be computed exactly, and we instead approximate it using a single episode (i.e., the current policy $\pi_t$ rollout in the environment). To ensure $\kappa$ is non-negative, we additionally perform a $\max(0, \cdot)$ operation after each PI update.

\begin{figure*}[t!]
\centering
\textbf{Gridworld}\par\medskip
\includegraphics[width=1\textwidth]{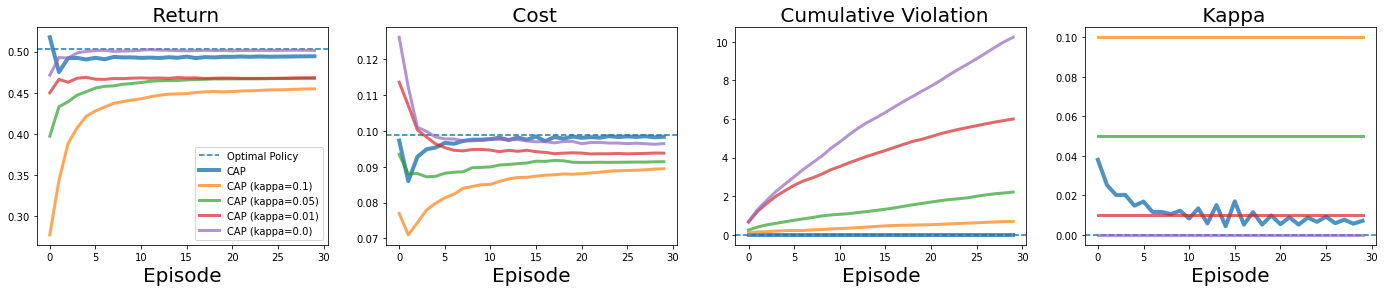}
\caption{\textbf{Tabular gridworld results.} CAP achieves near-optimal policy with zero constraint violations during training, while all ablations either converge to sub-optimal solutions or incur a high number of training violations.}
\label{figure:cap-gridworld}
\end{figure*}

Now, the full CAP approach is described in Algorithm \ref{algo:cap-full}. At a high level, CAP extends upon the basic model-based safe RL framework by (1) solving the conservative LP (Line 7, Eq~\eqref{eq:rl-objective-dual-representation-conservative}), and (2) adapting $\kappa$ using PI control (Lines 10 \& 11). We set the initial value for $\kappa$ using an exponential search mechanism, which we describe in the Appendix. We validate this LP formulation of CAP using a gridworld environment in our experiments. 

\subsection{CAP for High-Dimensional States}

Note that this tabular LP variant of CAP cannot extend to environments with continuous state and action spaces, representative of many high-dimensional RL problems of interest (e.g., robotics); their continuous nature precludes enumerating all state-action pairs, which is needed to express the linear program. Therefore, we propose a scalable implementation of CAP amenable to continuous control problems. First, we revert back to the policy-based formulation in Eq~\eqref{eq:rl-objective}, and define the following equivalent objective:
\begin{equation}
    \label{eq:rl-objective-conservative}
    \begin{split}
    \max_{\pi}& \quad \BE\Big[\sum_{t=0} \gamma^t r(s_t,a_t) \Big] \\
    \text{s.t.}& \quad \BE\Big[\sum_{t=0} \gamma^t \cdot \big(c(s_t,a_t) + \kappa u_{\hat{T}}(s_t,a_t) \big) \Big] \leq C
    \end{split}
\end{equation}
where $u(s_t,a_t)$ is a heuristic penalty function based on statistics of the learned transition model.

To optimize Eq~\eqref{eq:rl-objective-conservative}, we employ the constrained cross entropy method (CCEM) \cite{wen2020constrained, liu2021constrained} as our trajectory optimizer; the procedure is summarized in Algorithm \ref{algo:cap-ccem} in the Appendix. At a high level, CCEM first samples $N$ action sequences (Line 4) and computes their values and costs (Line 5). Then, if there were more than $E$ samples that satisfy the constraint, then the $E$ samples with highest rewards are selected (Line 10); otherwise, the $E$ samples with lowest costs are selected (Line 8). These selected \textit{elite} samples are used to update the sampling distribution (Line 12). This process continues for $I$ iterations, and the eventual distribution mean is selected as the optimal action sequence (Line 14). 

Next, we specify the choice of transition model and penalty function $u(s,a)$ for state-based and visual observation-based implementations, respectively. For the former, we model the environment transition function using an ensemble of size $N$, $\{\hat{T}^i_{\theta} = \mc{N}(\mu^i_{\theta}, \Sigma^i_{\theta}) \}_{i=1}^N$ \cite{chua2018deep} and set $u(s,a) = \max_{i=1}^N \norm{\Sigma^i_{\theta}(s,a)}_{\text{F}}$ to be the maximum Frobenius norm of the ensemble standard deviation outputs, as done for offline RL in \citet{yu2020mopo}. Our visual-based implementation builds on top of PlaNet \cite{hafner2019learning}, a state-of-art visual model-based transition model; here, we set $u(s,a)$ to be the ensemble disagreement of one-step hidden state prediction models \cite{sekar2020planning}. See the Appendix for details. In both tabular and deep RL settings, CAP adds negligible computational overhead, making it a practical safe RL algorithm.

\section{Experiments} 
CAP provides a general, principled, and practical framework for applying MBRL to safe RL. To support this claim, we comprehensively evaluate CAP against its ablations as well as model-free baselines in various environments. More specifically, we investigate the following questions:

\begin{enumerate}[label=(Q\arabic*), align=left, wide=0pt, leftmargin=*]
    \item Does CAP's theoretical guarantees approximately hold in tabular environments?
    \item Does CAP improve reward and safety upon its ablations (i.e., fixed $\kappa$ values)?
    \item Is CAP more sample and \textit{cost} efficient than state-of-art model-free baselines? 
    \item Can CAP learn safe policies even with high-dimensional inputs, such as images? 
\end{enumerate}

We investigate Q1-2 using a gridworld environment, and Q2-4 on two high-dimensional deep RL benchmarks. Our code is included in the supplementary materials. 

\subsection{Gridworld} 

We begin by validating our theoretical findings in tabular gridworld, where we can solve the constrained optimization problem (Eq~\eqref{eq:rl-objective-dual-representation-conservative}) exactly using standard LP algorithms. 

\subsubsection{Environment, Methods, Training Details} We consider an $8 \times 8$ gridworld with stochastic transitions; the reward and the cost functions are randomly generated Bernoulli distributions drawn according to a Beta prior. In addition to CAP, we compare against CAP ablations with fixed $\kappa$ values of $0, 0.01, 0.05$, and $0.1$; $\kappa=0$ corresponds to the basic MBRL approach without penalty. We also include the oracle LP solution computed using the true environment dynamics. For each method (except the oracle), the training procedure lasts $30$ iterations, in which each iterate includes (1) collecting $500$ samples using the current LP solution, (2) updating $\hat{T}$, and (3) solving the new conservative LP objective. See Appendix for more environment and training details. 

\subsubsection{Metrics \& Results} In Figure \ref{figure:cap-gridworld}, we illustrate the training curves for the return, cost, the cumulative number of intermediate policy solutions that violate the cost threshold. The first two metrics are standard, and the violation metric measures how safely a method explores. We additionally illustrate the training evolution of $\kappa$. These curves are the averages taken over the $100$ gridworld simulations; we defer standard deviation error bar to the Appendix for better visualizations except for the kappa curve.

As expected, CAP ($\kappa=0$), due to its asymptotically consistent nature, converges to the oracle as training continues; however, this comes at the cost of the highest number of training violations, precisely due to the lack of an uncertainty-aware penalty function. In sharp contrast, CAP is very close to the oracle in both reward and cost, and does so without incurring a single violation in all $100$ trials, as indicated by its flat horizontal line at $0$ in the violation plot. These results validate our key theoretical claims that when the cost penalty is applied properly, the resulting policy is guaranteed to be safe (Theorem \ref{theorem:tabular-guarantee}); furthermore, it applies to all intermediate policies during training (Corollary \ref{corollary:tabular-guarantee-training}),  answering Q1 above.

On the other hand, CAP ablations with fixed $\kappa$ values, though constraint-satisfying at the end, incur higher number of violations and achieve sub-optimal solutions, validated by their lower returns and conservative costs. Interestingly, while all these variants on \textit{average} satisfy the constraint from Episode 2 and on (i.e., their average costs are below the threshold of $0.1$ in the cost plot), their average numbers of violations uniformly increase throughout training. This suggests that fixed $\kappa$ values are not \textit{robust} to random gridworld simulations, as the same value may be too modest for some random draws and hence incur violations, and too aggressive for some other draws and lead to suboptimal solutions. Indeed, we observe greater variance in the performance of fixed $\kappa$ ablations than CAP (see Appendix). 

In contrast, CAP automatically finds suitable sequences of $\kappa$ for each simulation, evidenced by the large variance the $\kappa$ sequences exhibit over the simulations. Its zero-violation and lower variance in all metrics suggest that the adaptive penalty mechanism has the additional benefit of being \textit{distributionally robust} to the randomness in the environment distribution. Finally, the overall downward oscillating trend indeed reflects CAP's effectiveness at using feedback to optimize reward and cost simultaneously. Together, these ablations answer Q2 affirmatively. In the Appendix, we provide additional result analysis for this tabular experiment.

\subsection{High-Dimensional Environments}

Next, we evaluate CAP's generality and effectiveness in high-dimensional environments. We begin by summarizing our experimental setup; details are in the Appendix. 

\subsubsection{Environments}

\begin{figure}[t]
\begin{minipage}[t]{0.49\columnwidth}
\centering
\includegraphics[width=\textwidth]{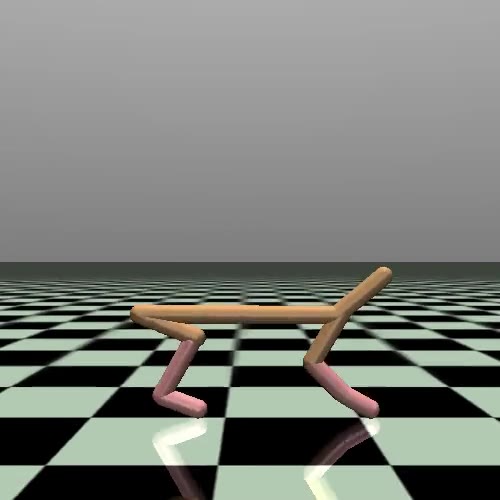}\par\medskip
\textbf{HalfCheetah}
\end{minipage}
\hfill
\begin{minipage}[t]{0.49\columnwidth}
\centering
\includegraphics[width=\textwidth]{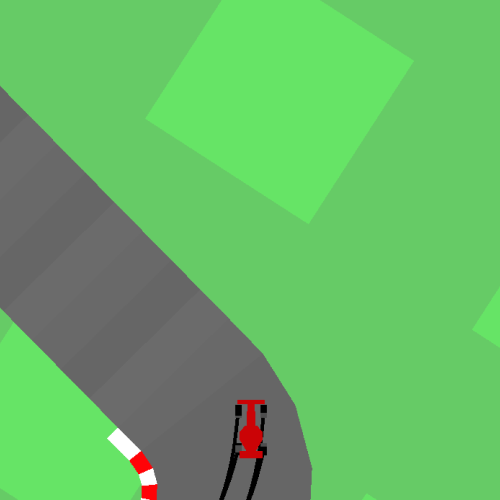}\par\medskip
\textbf{Car-Racing}
\end{minipage}
\caption{\textbf{High Dimensional Environments.}}
\label{figure:environments}
\end{figure}

We consider two deep RL environments, spanning different input modalities, constraint types, and associated cost types. We describe these environments here; see Figure \ref{figure:environments} for illustrations:
\begin{itemize}
\item A velocity-constrained version of Mujoco HalfCheetah \cite{todorov2012mujoco}, representative of robot tasks in which we want to avoid robots damaging themselves from over-exertion. The state space is 17-dimensional and the action space is 6-dimensional (controlling the robot joints). To ensure a meaningful cost constraint, we constrain the average velocity to be below $50\%$ of the average velocity of an unconstrained expert PPO agent ($152$) {\cite{zhang2020first}}.

\item A 2D image-based racing task Car-Racing \cite{brockman2016openai}, with randomized tracks in every episode. The state space is a $64 \times 64 \times 3$ top-down view of the car; the action space is continuous and 3-dimensional (steering, acceleration, and braking). A per-step cost of 1 is incurred if any wheels skid from excessive force; the cost limit is 0, indicating that the car should never skid. This task is representative of visual environments with complex dynamics.
\end{itemize}

\begin{table}
\resizebox{\columnwidth}{!}{
\begin{tabular}{|l|rrrr|}
\hline 
\multicolumn{1}{|c|}{\multirow{2}{*}{\textbf{Method}}}
& \multicolumn{4}{c|}{\textbf{HalfCheetah}} \\
& Steps & Return ($\uparrow$) & Cost (Limit 152) ($\downarrow$) & Violation ($\downarrow$)\\
\hline 
\textbf{Random}
& NA & -29.3 & 52.7 & NA \\
\hline
\textbf{PPO} 
& 1M   & 2791.3 & 296.9 & 378.0 \\ 
& 100K & 670.2  & 97.6  & 0 \\
\hline
\textbf{PPO-Lag}
& 1M   & 1436.8 & 150.7 & 108.0 \\
& 100K & 670.2  & 97.6  & 0 \\ 
\hline
\textbf{FOCOPS}
& 1M   & 1591.4 & 160.2 & 202.8 \\ 
& 100K & 456.0   & 84.6  & 0 \\
\hline
\textbf{CAP} (Ours)
& 100K & 1456.3 & 144.3 & 1.7 \\
\hline 
\multicolumn{1}{|c|}{\multirow{2}{*}{\textbf{Method}}}
& \multicolumn{4}{c|}{\textbf{Car-Racing}} \\ 
& Steps & Return ($\uparrow$) & Cost (Limit 0) ($\downarrow$) & Violation ($\downarrow$) \\
\hline 
\textbf{Random}
& NA & 3.9 & 159.3 & NA \\
\hline 
\textbf{PPO}
& 1M   & 32.7   & 52.0  & 975.0 \\ 
& 200K & 48.8   & 224.8 & 196.0 \\ 
\hline 
\textbf{PPO-Lag} 
& 1M   & -3.2   & 0.0   & 101.3 \\
& 200K & -3.2   & 0.3  & 101.3 \\ 
\hline 
\textbf{FOCOPS}
& 1M   & 23.4   & 0.8   & 581.0 \\ 
& 200K & 16.2   & 3.9   & 172.0 \\

\hline 
\textbf{CAP} & 200K & 21.7   & 0.4   & 93.3 \\
\hline 
\end{tabular}}
\caption{\textbf{Baseline comparison results.} CAP is substantially more sample-efficient with respect to both return and cost. In addition, it is much safer during training, as demonstrated by the significantly fewer violations.}
\label{table:model-free-comparison}
\end{table}

\begin{figure*}[t!]
\centering 
\includegraphics[width=0.9\textwidth]{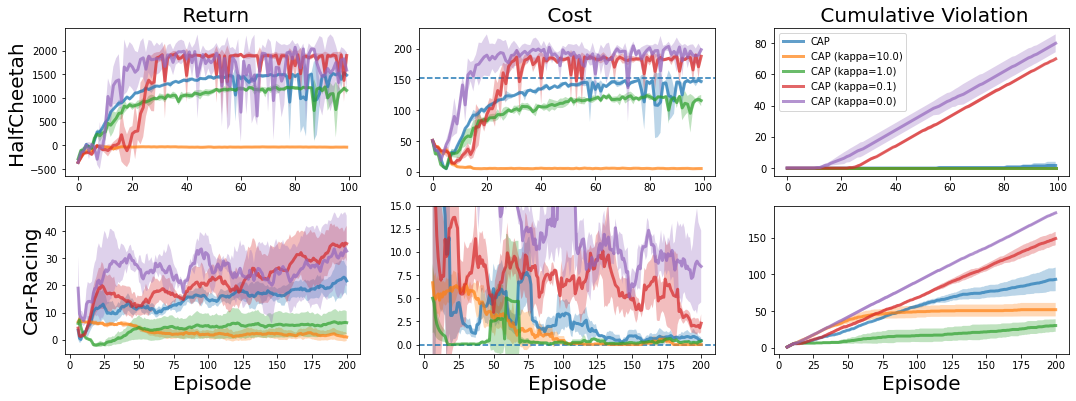}
\caption{\textbf{CAP Ablations on HalfCheetah (top) and Car-Racing (bottom).} The adaptive $\kappa$ achieves better balance than all fixed $\kappa$ values and incurs much fewer violations during training.}
\label{figure:cap-ablation}
\end{figure*}
\vspace{-0.2cm}
\subsubsection{Baselines}
In these high-dimensional settings, we compare against both deep model-free safe RL baselines as well as CAP ablations. To this end, we include PPO-Lagrangian (\textbf{PPO-LAG}), which iterates between PPO policy update and cost lagrangian parameter update to simultaneously optimize return and constraint satisfaction; despite its simplicity, PPO-LAG has been shown to be a strong safe RL baseline \cite{ray2019benchmarking}. Additionally, we include \textbf{FOCOPS} \cite{zhang2020cautious}, a state-of-art model-free algorithm which uses first-order projection methods to ensure that constraint satisfaction minimally deteriorates policy return. Finally, we include PPO \cite{schulman2017proximal} in order to provide comparison to an unconstrained method. Finally, as in the gridworld experiment, we consider CAP ablations with fixed $\kappa$ values and separately visualize the training curves. We use $\kappa=0, 0.1, 1, 10$ for both HalfCheetah and Car-Racing to include a wide range of magnitudes; in particular, $\kappa=0$ corresponds to the basic model-based safe RL approach without the conservative penalty; this is the constrained CEM method introduced by~\cite{liu2021constrained}. In Appendix \ref{appendix:high-dimensional-additional}, we additionally compare to PETS~\cite{chua2018deep}, a widely used unconstrained model-based planning method.

\subsubsection{Training Details \& Evaluation Metrics}
For model-free algorithms, we train using 1M environment steps, and for our model-based algorithms, we train using 100K steps for HalfCheetah and 200K steps for Car-Racing. An episode in both environments is 1000 steps. We report results on Car-Racing at 200k since it is more challenging to learn the dynamics of a visual environment; both model-free and model-based methods take more steps to converge in Car-Racing. As in gridworld, we report the training curves of the return, cost, and cumulative episodic violations; they are included in the Appendix. In the main text, we report a numerical ``snapshot" version of these curves at the end of training (average over last $10$ episodes); for model-free baselines, we also report these metrics after 100K/200K steps to have a head-to-head comparison against CAP. We include all hyperparameters and implementation details in the Appendix.

\subsubsection{Baseline Comparisons Results}
The results are shown in Table \ref{table:model-free-comparison}. While the most competitive algorithm FOCOPS matches CAP's return and cost with 1 million environmental steps in both environments, CAP requires about 5-10$\times$ fewer steps, demonstrating its sample efficiency. Furthermore, the relative performance of CAP at 100K/200K steps is significantly better than all model-free algorithms, which have not learned a good policy by that point. This has direct implication for safety. On the Car-Racing environment, because model-free methods learn much slower, they also spend more training episodes violating the constraint. On HalfCheetah, all methods achieve 0 cumulative episodic violations with 100K steps, but this is because in HalfCheetah the algorithm will not violate the speed constraint initially because it has not learned the running behavior yet. 

It is particularly illuminating to observe the cumulative episodic violations at the end of each method's training: we see that CAP violates the speed constraint in HalfCheetah for fewer than 2 episodes out of its 100 training episodes, while all baselines violate this constraint at much higher rates and volumes. This confirms that these model-free methods struggle to ensure safety during training regardless of the safety of their final policy, while CAP is able to minimize violations throughout training. On the more challenging image-based Car-Racing environment, CAP cannot avoid training violations entirely, but manages to significantly reduce them compared to the baselines. These comparisons provide strong evidence for Q3 and Q4.

\subsubsection{CAP Ablations Results} 
The training curves of CAP as well as its ablations are illustrated in Figure \ref{figure:cap-ablation}. Consistent with our findings in gridworld, setting $\kappa$ to a fixed value is rarely desirable. Setting it too low often leads to solutions that fail to satisfy constraint, suggested by the high training cost and violations of CAP ($\kappa=0.0, 0.1)$ in both environments; setting it too high often precludes reward learning in the first place, evidenced by the training curves of CAP ($\kappa=10.0)$ in both environments. Furthermore, since the cost limit is 0 on Car-Racing, exploration will always violate the constraint initially. Hence, we can additionally measure the safe exploration of a method by its slope on the violation curve: the lower the slope, the fewer violations a method incurs as training goes on. There, we see that CAP has the flattest violation slope out of all variants that learn policies with non-trivial driving behavior, answering Q2 affirmatively.

\vspace{-0.2cm}
\section{Conclusion}
We have presented CAP, a general model-based safe reinforcement learning framework. We have derived a linear programming formulation and proven that we can guarantee safety by using a conservative penalty; this penalty is then made adaptive based on environmental feedback to make it practically useful. We have validated our theoretical results in a tabular gridworld environment and demonstrated that CAP can be easily extended to high-dimensional visual environments through appropriate choices of optimizer and transition models. In future work, we aim to extend CAP to the offline and risk-sensitive settings \cite{yu2020mopo, ma2021conservative}. Overall, we believe that CAP opens many future directions in making MBRL practically useful for safe RL.
\clearpage 
\section*{Acknowledgement}
This work is funded in part by an Amazon Research Award, gift funding from NEC Laboratories
America, NSF Award CCF-1910769, NSF Award CCF-1917852 and ARO Award W911NF-20-1-0080. The U.S. Government
is authorized to reproduce and distribute reprints for Government purposes notwithstanding any
copyright notation herein
\bibliography{aaai22}

\clearpage 
\appendix 
\section{Proofs} 
\label{appendix:proofs}
In this section, we provide proofs for the theoretical results appeared in Section 4. We will restate each of the results and then append their corresponding proof. 

\begin{lemma}[Cost Simulation Lemma and Upper Bound] 
Let the $\mc{F}$-induced IPM be defined as

\begin{equation}
    \label{eq:ipm}
    \resizebox{\columnwidth}{!}{$
    d_{\mc{F}}(\hat{T}(s,a), T(s,a)) \coloneqq \sup_{f \in \mc{F}} \lvert \BE_{s' \sim \hat{T}(s,a)}[f(s')] - \BE_{s' \sim T(s,a)}[f(s')] \rvert$}
\end{equation}
Then, the difference between the expected policy cost computed using $T$ and $\hat{T}$ is bounded above:
\begin{equation}
\label{eq:cost-simulation-lemma}
\resizebox{\columnwidth}{!}{
$\sum _{s,a}\big(\rho^\pi_T(s,a) - \rho^\pi_{\hat{T}}(s,a)\big)c(s,a) \leq \gamma \beta \sum_{s,a} \rho^\pi_{\hat{T}}(s,a) d_{\mc{F}}\big(\hat{T}(s,a), T(s,a)\big)$}
\end{equation}
\end{lemma}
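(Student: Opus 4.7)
The plan is to follow the standard simulation/value-difference decomposition, specialized to the cost value function and combined with the realizability assumption to bring in the IPM. Writing $V_c^{\pi,T}$ and $V_c^{\pi,\hat T}$ for the cost value functions under the two models, I first observe that
$$\sum_{s,a}\bigl(\rho^\pi_T(s,a)-\rho^\pi_{\hat T}(s,a)\bigr)c(s,a) \;=\; (1-\gamma)\,\mathbb{E}_{s_0\sim\mu_0}\bigl[V_c^{\pi,T}(s_0)-V_c^{\pi,\hat T}(s_0)\bigr],$$
so the LHS of the lemma is just a rescaling of the pointwise value gap at the initial state.

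Next I would subtract the two Bellman equations for $V_c^{\pi,T}$ and $V_c^{\pi,\hat T}$, cancel the common $c(s,a)$, and add/subtract $\gamma\,\mathbb{E}_{s'\sim\hat T(s,a)}V_c^{\pi,T}(s')$ to split the difference into (i) a one-step model-mismatch term evaluated against $V_c^{\pi,T}$ and (ii) a $\gamma$-contracted recursion of the same value gap carried forward under $\hat T$. Unrolling this recursion along trajectories generated by $\pi$ in $\hat T$ collapses the time index into the discounted occupancy $\rho^\pi_{\hat T}$, giving
$$(1-\gamma)\bigl(J_c^T(\pi)-J_c^{\hat T}(\pi)\bigr) \;=\; \gamma\sum_{s,a}\rho^\pi_{\hat T}(s,a)\Bigl(\mathbb{E}_{s'\sim T(s,a)}V_c^{\pi,T}(s')-\mathbb{E}_{s'\sim\hat T(s,a)}V_c^{\pi,T}(s')\Bigr).$$
This is the cost analog of the standard simulation lemma of \citet{luo2021algorithmic} and \citet{yu2020mopo}; I would cite them and write out the unrolling carefully, because this is the one step where bookkeeping about which model generates the expectations is easy to botch.

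With the decomposition in hand, Assumption~\ref{assumption:cost-realizability} finishes the job: since $V_c^\pi/\beta\in\mc{F}$, the definition of the $\mc{F}$-induced IPM gives
$$\bigl|\mathbb{E}_{s'\sim T(s,a)}V_c^{\pi,T}(s')-\mathbb{E}_{s'\sim\hat T(s,a)}V_c^{\pi,T}(s')\bigr| \;\le\; \beta\, d_{\mc{F}}\bigl(\hat T(s,a),T(s,a)\bigr)$$
pointwise in $(s,a)$. Substituting this bound inside the sum and noting that $\rho^\pi_{\hat T}(s,a)\ge 0$ lets us drop the absolute value on the outside, yielding the claimed inequality. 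The main obstacle is really a conceptual one: making sure we expand the value gap against $V_c^{\pi,T}$ (not $V_c^{\pi,\hat T}$) so that the realizability assumption applies on the correct side, and being consistent about the $(1-\gamma)$ factor that converts between occupancy sums and expected discounted returns; after that, the concluding calculation is essentially one line.
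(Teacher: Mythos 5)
Your proposal is correct and follows essentially the same route as the paper's proof: the telescoping/simulation-lemma decomposition of the occupancy-weighted cost gap into a $\rho^\pi_{\hat T}$-weighted one-step model-mismatch term, followed by Assumption~\ref{assumption:cost-realizability} and the definition of the IPM to bound that term by $\beta\, d_{\mc{F}}(\hat T(s,a),T(s,a))$. If anything your bookkeeping is slightly cleaner than the appendix's, since you insist on evaluating both next-state expectations against the \emph{same} function $V_c^{\pi,T}$ (as the IPM bound requires) and you keep the $(1-\gamma)$ normalization consistent throughout.
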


\begin{proof}
Using the telescoping lemma \cite{yu2020mopo, luo2021algorithmic}, we have that 
\begin{align*}
&\frac{1}{1-\gamma} \sum_{s,a}\big(\rho^\pi_T(s,a) - \rho^\pi_{\hat{T}}(s,a) \big)c(s,a) \\ 
=& \gamma \sum_{s,a} \rho^\pi_{\hat{T}}(s,a) \Big[ \BE_{s'\sim T(s,a)}V^\pi_{T}(s') - \BE_{s'\sim \hat{T}(s,a)} V^\pi_{\hat{T}}(s') \Big] 
\end{align*}
Then, by Assumption 4.1, we have that 
\begin{align*}
& \gamma \sum_{s,a} \rho^\pi_{\hat{T}}(s,a) \Big[ \BE_{s'\sim T(s,a)}V^\pi_{T}(s') - \BE_{s'\sim \hat{T}(s,a)} V^\pi_{\hat{T}}(s') \Big]  \\
\leq &\gamma \sum_{s,a} \rho^\pi_{\hat{T}}(s,a) \sup_{f \in \beta \mc{F}} \Big\lvert \BE_{s' \sim \hat{T}(s,a)}[f(s')] - \BE_{s' \sim T(s,a)}[f(s')] \Big\rvert \\ 
\leq & \gamma \sum_{s,a} \rho^\pi_{\hat{T}}(s,a) \beta d_{\mc{F}}(\hat{T}(s,a), T(s,a))
\end{align*}
Putting everything together, we have that 
\begin{align*}
    &\sum _{s,a}\big(\rho^\pi_T(s,a) - \rho^\pi_{\hat{T}}(s,a)\big)c(s,a)\\ 
    \leq &\gamma \beta \sum_{s,a} \rho^\pi_{\hat{T}}(s,a)  d_{\mc{F}}\big(\hat{T}(s,a), T(s,a)\big)
\end{align*}
\end{proof}

\begin{theorem}[Tabular Case High-Probability Feasibility Guarantee]
\label{theorem:tabular-guarantee-appendix}
Assume $\mc{F} = \{f: \norm{f}_\infty \leq 1 \}$ and that Assumption \ref{assumption:cost-realizability} holds. Define $u(s,a) \coloneqq \sqrt{\frac{|\mc{S}|}{8n(s,a)} \ln \frac{4|\mc{S}||\mc{A}|}{\delta}}$, where $n(s,a)$ is the count of $(s,a)$ in $\mc{D}$ and $\delta \in (0, 1]$. Then, with probability $1-\delta$, a policy that is feasible for Eq~\eqref{eq:rl-objective-dual-representation-conservative} is also feasible for Eq~\eqref{eq:rl-objective-dual-representation}.
\end{theorem}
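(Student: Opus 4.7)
The plan is to chain the Cost Simulation Lemma with a uniform high-probability bound on the per-$(s,a)$ IPM, so that the conservative penalty $\gamma\beta u(s,a)$ dominates the model-induced cost gap for every policy simultaneously. First I would argue that for the specified class $\mc{F}=\{f:\|f\|_\infty\le 1\}$ the IPM coincides with the $\ell_1$ distance (equivalently, twice the total variation) between $\hat{T}(\cdot\mid s,a)$ and $T(\cdot\mid s,a)$: this is just the duality between $\ell_1$ and $\ell_\infty$, and the supremum is attained on the finite set of sign vectors $\{-1,+1\}^{|\mc{S}|}$.

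Second, I would establish that with probability at least $1-\delta$, $d_{\mc{F}}(\hat{T}(s,a),T(s,a))\le u(s,a)$ holds simultaneously for every $(s,a)$. For a fixed $(s,a)$ and a fixed $f\in\{-1,+1\}^{|\mc{S}|}$, the quantity $\BE_{\hat{T}(s,a)}[f]-\BE_{T(s,a)}[f]$ is an average of $n(s,a)$ bounded i.i.d.\ deviations, so Hoeffding's inequality gives an exponential tail. Discretizing the supremum over $\mc{F}$ to its $2^{|\mc{S}|}$ sign-vector vertices and then union-bounding over these vertices and over the $|\mc{S}||\mc{A}|$ state-action pairs produces precisely a bound of the form $\sqrt{C|\mc{S}|\,\ln(|\mc{S}||\mc{A}|/\delta)/n(s,a)}$, matching the $u(s,a)$ declared in the statement up to the stated constants.

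Third, I would condition on this high-probability event $\mathcal{E}$ and take any $\pi$ that is feasible for the conservative LP~\eqref{eq:rl-objective-dual-representation-conservative}. Applying the Cost Simulation Lemma to $\pi$, replacing $d_{\mc{F}}(\hat{T}(s,a),T(s,a))$ by its pointwise upper bound $u(s,a)$, and dividing by $1-\gamma$ gives
\begin{align*}
\frac{1}{1-\gamma}\sum_{s,a}\rho^\pi_T(s,a)c(s,a)
&\le \frac{1}{1-\gamma}\sum_{s,a}\rho^\pi_{\hat{T}}(s,a)\bigl(c(s,a)+\gamma\beta u(s,a)\bigr)\\
&\le C,
\end{align*}
where the last inequality is exactly the conservative constraint that $\pi$ satisfies. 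Hence $\pi$ is feasible for~\eqref{eq:rl-objective-dual-representation} on $\mathcal{E}$, which has probability at least $1-\delta$.

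The step I expect to be the main obstacle is the concentration argument, specifically getting the correct dependence on $|\mc{S}|$ inside the square root. A naive per-coordinate Hoeffding and union bound over $s'$ would give an $\|\cdot\|_\infty$ bound that then inflates by a factor of $|\mc{S}|$ outside the square root after converting to $\ell_1$, which is too loose. The right move is to take the supremum over $f$ before applying Hoeffding, reducing to the $2^{|\mc{S}|}$ sign vertices of the $\|\cdot\|_\infty$-ball; the $\ln 2^{|\mc{S}|}=|\mc{S}|\ln 2$ entropy term then appears inside the square root and produces the advertised $\sqrt{|\mc{S}|/n(s,a)}$ scaling. Balancing the union-bound budget $\delta$ across the $(s,a)$ pairs and the sign-vector class, while keeping track of the Hoeffding constant, is the only quantitative care needed.
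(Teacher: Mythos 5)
Your proposal follows the same route as the paper's proof: reduce the requirement to $u(s,a)\ge d_{\mc{F}}(\hat{T}(s,a),T(s,a))$ for all $(s,a)$ via the Cost Simulation Lemma, identify $d_{\mc{F}}$ with total variation (half the $\ell_1$ distance) for the sup-norm ball, and invoke the multinomial $\ell_1$ concentration bound together with a union bound over the $|\mc{S}||\mc{A}|$ state-action pairs before chaining the inequalities to transfer feasibility. The only difference is that you spell out the sign-vector discretization and Hoeffding argument underlying that concentration bound, which the paper simply cites; your argument is correct.
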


\begin{proof}
In order for a policy that is feasible for Eq~\eqref{eq:rl-objective-dual-representation-conservative} is also feasible for Eq~\eqref{eq:rl-objective-dual-representation}, we need to have 
\begin{align*}
&\frac{1}{1-\gamma}\sum_{s,a}\rho^\pi_T(s,a)c(s,a) \\
&\leq \frac{1}{1-\gamma} \sum_{s,a} \rho^\pi_{\hat{T}}(s,a)\big(c(s,a) + \gamma \beta u(s,a) \big) \leq C.
\end{align*}
By the lemma above, this is equivalent to having $u(s,a) \geq d_{\mc{F}}(\hat{T}(s,a), T(s,a)), \forall s, a$. Since, we assume $\mc{F} = \{f: \norm{f}_\infty \leq 1 \}$, this implies
\begin{align*}
    &d_{\mc{F}}(\hat{T}(s,a), T(s,a)) \\
    =& d_{\text{TV}}(\hat{T}(s,a), T(s,a)) \\ 
    =& \frac{1}{2} \norm{\hat{T}(s,a), T(s,a)}_1
\end{align*}
where the last step follows because $\hat{T}(s,a)$ and $T(s,a)$ are multinomial distributions, which are countable. Then, we need
\begin{equation}
\label{eq:usa-bound}
u(s,a) \geq \frac{1}{2} \max_{s,a}\norm{\hat{T}(s,a), T(s,a)}_1
\end{equation}
By Hoeffding's inequality and the $l_1$ concentration bound for multinomial distribution, we have that, for any $\delta > 0$, we can set $u(s,a) \coloneqq \sqrt{\frac{|\mc{S}|}{8n(s,a)} \ln \frac{4|\mc{S}||\mc{A}|}{\delta}}$, then Eq~\eqref{eq:usa-bound} will hold with probability $1-\delta$, completing the proof.
\end{proof}

\begin{corollary}[High-Probability Zero-Training-Violations Guarantee]
\label{corollary:tabular-guarantee-training-appendix}
Assume the same set of assumptions as Theorem \ref{theorem:tabular-guarantee-appendix} and that the training lasts for $K$ episodes. Then, for any $\delta \in (0, 1]$, define $u(s,a) \coloneqq \sqrt{\frac{|\mc{S}|}{8n(s,a)} \ln \frac{4K|\mc{S}||\mc{A}|}{\delta}}$. Then, with probability $1-\delta$, all intermediate solutions to Eq~\eqref{eq:rl-objective-dual-representation-conservative} are feasible for Eq~\eqref{eq:rl-objective-dual-representation}.
\end{corollary}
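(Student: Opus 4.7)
The plan is to reduce the corollary to Theorem~\ref{theorem:tabular-guarantee-appendix} by a union-bound argument over the $K$ training episodes. At each episode $k \in \{1,\dots,K\}$, the agent uses its current dataset $\mc{D}_k$ to build a learned transition model $\hat{T}_k$ and solves the conservative LP in Eq~\eqref{eq:rl-objective-dual-representation-conservative} to produce an intermediate policy $\pi_k$. I want to show that simultaneously for every $k$, the policy $\pi_k$ is feasible in the true MDP, i.e.\ $J_c(\pi_k) \leq C$.

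First, I would observe that Theorem~\ref{theorem:tabular-guarantee-appendix} really established the following concentration-based fact: if we set the penalty to $u_{\delta'}(s,a) = \sqrt{\frac{|\mc{S}|}{8n(s,a)} \ln \frac{4|\mc{S}||\mc{A}|}{\delta'}}$ for any $\delta' \in (0,1]$, then with probability at least $1 - \delta'$ the $\ell_1$ (equivalently TV) estimation error $\tfrac{1}{2}\|\hat{T}(s,a) - T(s,a)\|_1$ is uniformly upper bounded by $u_{\delta'}(s,a)$ over all $(s,a)$, which is exactly the condition ensuring that any solution of the conservative LP is feasible for the true LP. Thus each individual iterate $\pi_k$ inherits a one-episode feasibility guarantee with failure probability at most $\delta'$, provided the bound uses parameter $\delta'$ in place of $\delta$.

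Second, I would apply the union bound. Setting $\delta' = \delta/K$ in the per-episode bound yields a per-episode penalty $u_k(s,a) = \sqrt{\frac{|\mc{S}|}{8n_k(s,a)} \ln \frac{4K|\mc{S}||\mc{A}|}{\delta}}$, which matches the $u(s,a)$ prescribed in the corollary statement (with $n(s,a)$ interpreted as the current count $n_k(s,a)$ at that iteration). Let $E_k$ denote the bad event that at episode $k$ the concentration bound fails for some $(s,a)$; then $\Pr[E_k] \leq \delta/K$. By the union bound, $\Pr\!\left[\bigcup_{k=1}^K E_k\right] \leq K \cdot \delta/K = \delta$. On the complementary event, the TV bound holds at every episode, so by Theorem~\ref{theorem:tabular-guarantee-appendix} every intermediate solution $\pi_k$ is feasible for Eq~\eqref{eq:rl-objective-dual-representation}, giving the desired $1-\delta$ probability guarantee.

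I do not expect a major obstacle here, since all the heavy lifting (the cost simulation lemma, the TV reduction, and the Hoeffding/$\ell_1$ multinomial concentration bound) has already been done in Theorem~\ref{theorem:tabular-guarantee-appendix}. The only mild subtlety is that the datasets $\mc{D}_k$ across episodes are not independent, because each $\mc{D}_k$ depends on policies chosen from previous iterates; however, the union bound does not require independence, only that each per-episode concentration statement hold marginally with probability $\geq 1 - \delta/K$, which the $\ell_1$ multinomial bound delivers conditionally on $n_k(s,a)$. I would briefly note this point so the proof is self-contained.
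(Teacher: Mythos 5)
Your proposal is correct and follows essentially the same route as the paper: allocate failure probability $\delta/K$ to each episode, which converts the $\ln\frac{4|\mc{S}||\mc{A}|}{\delta}$ factor in Theorem~\ref{theorem:tabular-guarantee-appendix} into $\ln\frac{4K|\mc{S}||\mc{A}|}{\delta}$, and then union-bound over the $K$ iterates. Your added remark that the union bound needs no independence across the (policy-dependent) datasets $\mc{D}_k$ is a useful clarification the paper leaves implicit, but it does not change the argument.
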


\begin{proof}
Since we want all $K$ intermediate solutions to be feasible with probability $1-\delta$, the fault tolerance for any individual intermediate solution is $\delta / K$; this follows from an union bound argument. Therefore, we can adjust the concentration bound from Hoeffding's inequality by a factor of $K$ and obtain that by setting $u(s,a) \coloneqq \sqrt{\frac{|\mc{S}|}{8n(s,a)} \ln \frac{4K|\mc{S}||\mc{A}|}{\delta}}$, with probability $1-\delta$, we can guarantee all intermediate solutions to Eq~\eqref{eq:rl-objective-dual-representation-conservative} are feasible for Eq~\eqref{eq:rl-objective-dual-representation}.
\end{proof}

\section{CAP with Linear Programming}
This implementation of CAP is described in detail in the main text. Here, we describe the exponential search mechanism we use to initialize $\kappa$ for the very first training episode. Starting with a high value for $\kappa$ (e.g., 10), we use it to construct a new constrained optimization problem of form Eq~\eqref{eq:rl-objective-dual-representation-conservative} and attempt to solve it. If the problem is infeasible, then we halve the value of $\kappa$ and repeat the process. We stop at the first value of $\kappa$ for which the problem is feasible, and this value is taken as the initialized $\kappa$ value. 

\section{CAP with Constrained Cross Entropy Method}
In Algorithm \ref{algo:cap-ccem}, we provide the pseudocode for the constrained cross entropy method (CCEM). Here, we reiterate the algorithm description from the main text for completeness. At a high level, CCEM first samples $N$ action sequences (Line 4) and computes their values and costs (Line 5). Then, if there were more than $E$ samples that satisfy the constraint, then the $E$ samples with highest rewards are selected (Line 10); otherwise, the $E$ samples with lowest costs are selected (Line 8). These selected \textit{elite} samples are used to update the sampling distribution (Line 12). This process continues for $I$ iterations, and the eventual distribution mean is selected as the optimal action sequence (Line 14). 
\begin{algorithm}[htb]
	\caption{Constrained Cross Entropy Method} 
	\begin{algorithmic}[1]
	    \STATE {\bfseries Inputs:} Transition model estimate $\hat{T}_\theta$, experience buffer $\mc{D}$, cost limit $C$
	    \STATE {\bfseries CCEM Hyperparameters:} Population size $N$, elite population size $E$, max iteration $I$, planning horizon $H$, initial sampling distribution $\mc{N}(\mu_0, \Sigma_0)$
        \FOR {$i=1,\ldots,I$}
	        \STATE Sample $N$ action sequences $A^1 \coloneqq \{a_t^1\}_{t=1}^H, \ldots, A^N \coloneqq \{a_t^N\}_{t=1}^H \sim \mc{N}(\mu_{i-1}, \Sigma_{i-1})$
	        \STATE Evaluate the action sequences using Eq~\eqref{eq:rl-objective-conservative} by simulating trajectories in $\hat{T}_{\theta}$
            \STATE Construct feasible set $\mc{X} \coloneqq \{ A^n | \tilde{J}_c(A^n) \leq C, n \in [N] \}$
            \IF {$\lvert \mc{X} \rvert < E$}
                \STATE Construct elite set $\mc{E} \coloneqq \{$ The $E$ sequences out of all $\{A^n\}_{n=1}^N$ with lowest costs $\}$
            \ELSE
                \STATE Construct elite set $\mc{E} \coloneqq \{$The $E$ sequences in $\mc{X}$ with highest rewards $\}$
            \ENDIF
            \STATE Compute  $\mu_i, \Sigma_i$ using Maximum Likelihood over $\mc{E}$
		\ENDFOR
	    \STATE {\bfseries Outputs:} Optimal action sequence $\{a_1^*,...,a_H^* \} \coloneqq \mu_I$
	\end{algorithmic} 
\label{algo:cap-ccem}

\end{algorithm}

\section{Gridworld Experimental Detail} 
The gridworld environment is of size $8 \times 8$. The action space consists of the four directional primitives: Up, Down, Left, Right. For each action, there is a $20\%$ chance that slippage occurs and the agent moves in a random direction, introducing stochastic transitions to the environment. The reward and the cost functions are randomly generated Bernoulli distributions drawn according to a Beta(1,3) prior. Each state has uniform probability of being selected as the initial state for each episode. The discount rate is $0.99$. The cost threshold is kept at $0.1$ for all trials. Training lasts 30 episodes, and we use Gurobi \cite{gurobi} as the LP solver in our implementation. 

\begin{figure*}[t!]
\centering
\textbf{Gridworld}\par\medskip
\includegraphics[width=1\textwidth]{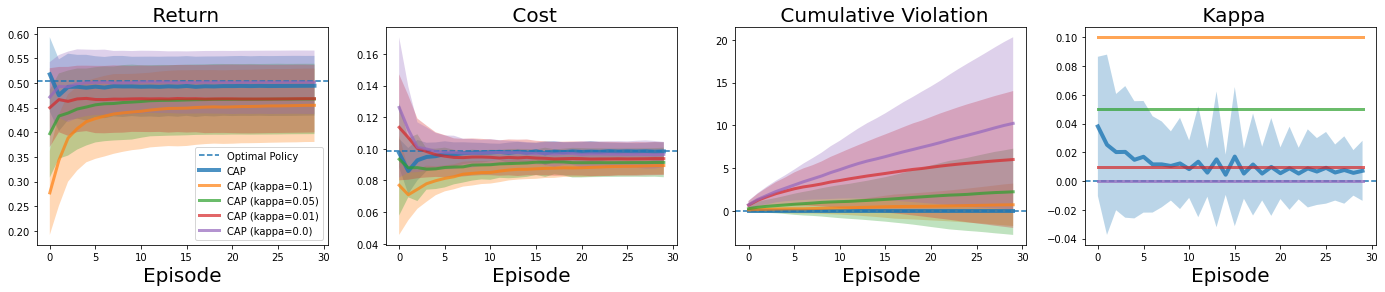}
\caption{\textbf{Tabular gridworld results with standard deviations.}}
\label{figure:cap-gridworld-std}
\end{figure*}

\begin{table*}
\centering
\begin{tabular}{l|rrrr|}
\hline 
\multicolumn{1}{c|}{\multirow{2}{*}{\textbf{Method}}}
& \multicolumn{4}{c|}{\textbf{Gridworld}} \\ 
& Kappa $\kappa$& Return & Cost (Limit 0.1) & Violations   \\
\hline 
CAP
& Adaptive & 0.494 $\pm$ 0.061 & 0.098 $\pm$ 0.006 & 0.0 $\pm$ 0.0 \\
\hline 
CAP
& 0.1 & 0.454 $\pm$ 0.074 & 0.089 $\pm$ 0.0055 & 0.7 $\pm$ 2.48 \\ 
\hline 
CAP
& 0.05 & 0.468 $\pm$ 0.071 & 0.091 $\pm$ 0.0093 & 2.22 $\pm$ 5.05 \\ 
\hline 
CAP
& 0.01 & 0.468 $\pm$ 0.069 & 0.0938 $\pm$ 0.0104 & 6.0 $\pm$ 8.01 \\ 
\hline 
CAP
& 0.0 & 0.50 $\pm$ 0.064 & 0.0965 $\pm$ 0.0079 & 10.23 $\pm$ 10.08 \\
\hline 
\end{tabular}
\caption{\textbf{CAP ablations results on Gridworld.}} 
\label{table:cap-gridworld-ablation}
\end{table*}

\subsection{Additional results}
In Figure \ref{figure:cap-gridworld-std}, we illustrate the full version of Figure \ref{figure:cap-gridworld} with one standard deviation error bars added in. In 
Table \ref{table:cap-gridworld-ablation}, we also show these results in table format. As shown, CAP ablations with fixed $\kappa$ values exhibit greater variance in their performances over $100$ random seeds; this supports the claim in the main text that fixed $\kappa$ values are more sensitive to the randomness in the environment distribution. Finally, we observe that CAP on average obtains higher return than the optimal policy in the first iteration. At initialization, CAP is not guaranteed to satisfy the constraints, and it may optimize a constraint-violating policy
that achieves higher return than the optimal policy, explaining this behavior.

\section{High-Dimensional Environments Experimental Detail}

\begin{figure*}[t!]
\centering
\includegraphics[width=\textwidth]{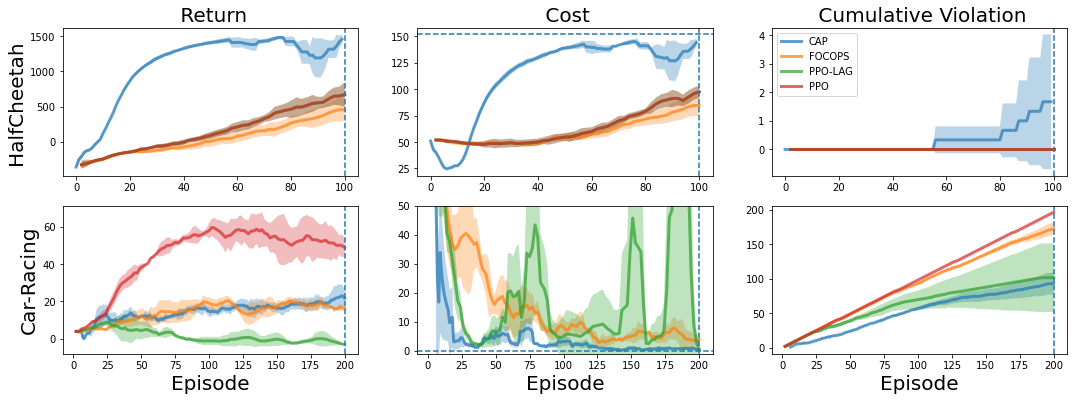}
\caption{\textbf{Step 100k/200k CAP and model-free baselines results on HalfCheetah (top) and Car-Racing (bottom).}}
\label{figure:cap-high-dimensional}
\end{figure*}

\begin{table*}
\centering
\begin{tabular}{l|rrrr|rrrr}
\hline 
\multicolumn{1}{c|}{\multirow{2}{*}{\textbf{Method}}}
& \multicolumn{4}{c|}{\textbf{HalfCheetah}}
& \multicolumn{4}{c}{\textbf{Car-Racing}} \\ 
& Kappa $\kappa$& Return & Cost (Limit 152) & Cost Violation & Kappa $\kappa$ & Return & Cost (Limit 0) & Cost Violation  \\
\hline 
CAP
& Adaptive & 1456.3 & 144.3 & 1.7
& Adaptive & 21.7 & 0.4 & 93.3 \\ 
\hline 
CAP
& 10.0 & -36.5 & 5.4 & 0.0
& 10.0  & 1.0   & 0.4 & 52.0 \\
\hline 
CAP
& 1.0 & 1092.9 & 111.5 & 0.0
& 1.0 & 6.2   & 0.4  & 30.3 \\
\hline 
CAP
& 0.1 & 1774.4 & 179.9 & 70.0
& 0.1 & 35.4   & 2.3 & 149.0 \\ 
\hline 
CAP
& 0.0 & 1588.0 & 198.1 & 80.0
& 0.0 & 26.9   & 9.3  & 184.0 \\
\hline 
CEM
& N/A & 2330.7 & 344.0 & 78.7
& N/A & 40.3   & 202.1 & 194.3 \\ 
\hline 
\end{tabular}
\caption{\textbf{CAP ablations results on HalfCheetah and Car-Racing.}} 
\label{table:cap-ablation}
\end{table*}

\subsection{Environments}
\begin{itemize}
\item \textbf{Velocity Constrained HalfCheetah:} The state space is 17-dimensional and the action space is 6-dimensional. We use the original environment reward, $v - \frac{1}{10} a^Ta$, $v$ is the forward velocity. The cost is $|v|$ \cite{zhang2020cautious}, meaning that there is a direct trade-off between cost and reward. The cost limit is set to $152$, half of the average speed of an unconstrained PPO expert agent \cite{zhang2020cautious}.
\item \textbf{Constrained Car-Racing:} The state space is a top down image of the car and the surrounding track. We downscale the image to 64 by 64 by 3. For model-free baselines, we also stack the last 4 frames, as common in reinforcement learning on image based environments. The action space is three dimensional, controlling steering, acceleration and braking. Each value is continuous and bounded. We use an action repeat of 2 to produce a better signal to the model \cite{hafner2019learning}. We keep the original reward, which incentivizes the agent to drive through as many tiles as possible. We use a binary cost that is 1 if the car skids. Skidding is a part of the original environment; a wheel skids if it's force exceeds the friction limit, which is different on grass and road surfaces.
\end{itemize}

\subsection{Uncertainty Estimators}
\textbf{State-based environments:} We model the environment transition function using an neural ensemble of size $N$, where network's output neurons parameterize a Gaussian distribution $\hat{T} = \mc{N}(\mu(s_t, a_t), \Sigma(s_t, a_t)$ \cite{chua2018deep}. We set $u(s,a) = \max_{i=1}^N \norm{\Sigma^i_{\theta}(s,a)}_{\text{F}}$ to be the maximum Frobenius norm of the ensemble standard deviation outputs, as done for offline RL in \citet{yu2020mopo}.

\textbf{Image-based environments:} We implement PlaNet \cite{hafner2019learning}, which models the environment transition function using a latent dynamics model with deterministic and stochastic transition states; we refer interested readers to the original paper for details. PlaNet does not provide an uncertainty estimate because it only utilizes a single transition model. To obtain an uncertainty estimate, we train a bootstrap ensemble of one-step hidden-state dynamics model as in \citet{sekar2020planning}. Each one-step model in the ensemble predicts, from each deterministic state $h$, the next stochastic state. We formulate our uncertainty estimator as $u(h, a) = Var({\mu_i(h, a)} | i=[1..K])$, the variance of ensemble predictions $\{\mu_i\}_{i=1}^K$. As in \citet{sekar2020planning}, to keep the scale of this uncertainty estimator similar to that of state-based uncertainty estimator, we multiply it by 10000.

\subsection{Network Architecture}
We use a neural network $C$ to approximate the environment's true cost function. When the cost is continuous, the network's output neurons parameterize a Gaussian distribution and we construct our conservative cost function as $C(s,a) + \kappa u(s,a)$. When the cost is binary, the network outputs a logit and we construct our conservative cost function as $\mathbbm{1}[C(s,a) + u(s,a) > 0]$.

To apply model free algorithms on imaged-based environments, we used a shared CNN module to encode the image input. The network consists of 5 convolutional layers followed by a ReLU non-linearity.
\begin{verbatim}
4x4 conv, 8, stride 2
3x3 conv, 16, stride 2
3x3 conv, 32, stride 2
3x3 conv, 64, stride 2
3x3 conv, 128, stride 1
\end{verbatim}

\subsection{Hyperparameters}
In Table \ref{table:cap-hyperparamters}, we include the hyperparameters we used for state-based and image-based experiments, respectively. 
\begin{table}[h]
\centering
\begin{tabular}{lrr}
\hline 
Hyperparameter & State-based & Image-based \\
\hline
Ensemble size $K$ & 5 & 5 \\
Optimizer & Adam & Adam \\
Optimizer $\kappa$ & Adam & Adam \\
Learning rate & 0.001 & 0.001 \\
Learning rate $\kappa$ & 0.1 & 0.01 \\
Initial $\kappa$ & 1.0 & 0.1 \\
Reward discount factor $\gamma$ & 0.99 & 0.99 \\
Cost discount factor $\gamma_{cost}$ & *0.99 & *0.99 \\
Batch size & 256 & 50 \\
Exploration steps & 1000 & 5000 \\
Experience buffer size & 1000000 & 1000000 \\
Uncertainty multiplier & 1 & 100000 \\
\hline
CEM Hyperparameters \\
\hline
Planning horizon $H$ & 30 & 12 \\
Max iteration $I$ & 5 & 10 \\
Population size $N$ & 500 & 1000 \\
Elite population size $E$ & 50 & 100 \\
\hline 
\end{tabular}
\caption{\textbf{CAP hyperparameters} 
} 
* We set cost discount factor to 1.0 when the cost is binary, so total cost per episode is directly interpretable.

\label{table:cap-hyperparamters}
\end{table}

\subsection{Additional results}
\label{appendix:high-dimensional-additional}
In Figure \ref{figure:cap-high-dimensional}, we illustrate the training curves of CAP and model free baselines in HalfCheetah and Car-Racing. For clarity, we focus on the first 100K/200K steps. The results are also presented in Table \ref{table:model-free-comparison}. In HalfCheetah, all model free methods have 0 cost violations in the first 100K steps, this is because they have not learnt a running gait that can violate the speed costraint. On the other hand, CAP is able to quickly a gait and keep cost below the limit, with less than two violations per 100 episodes. In CarRacing, all methods have high cost violations because the cost limit is 0. An initial random policy will violate the cost constraint and exploration will always risk violation. Even still, we see that CAP dominates FOCOPS, obtaining better episode return with lower cost and total violations. CAP has more cost violations than PPO-Lagrangian, but we see that this is because PPO-Lagrangian degrades to a trivial policy that maintains a stationary position, obtaining negative return with minimal risk of cost violations.

\subsection{Compute resources}
We use a single GTX 2080 Ti with 32 cores to run our experiments, each run takes about 10 hours in clock time.
 
\end{document}